\newtheorem{theorem}{Theorem}
\newtheorem{definition}{Definition}
\newtheorem{assumption}{Assumption}
\title{Constraints Penalized Q-learning for Safe Offline Reinforcement Learning}
\author {
    Haoran Xu\textsuperscript{\rm 1,2,3},
    Xianyuan Zhan\textsuperscript{\rm 4}\thanks{Corresponding author},
    Xiangyu Zhu\textsuperscript{\rm 2,3}
}
\begin{document}

\maketitle

\begin{abstract}
We study the problem of safe offline reinforcement learning (RL), the goal is to learn a policy that maximizes long-term reward while satisfying safety constraints given only offline data, without further interaction with the environment. This problem is more appealing for real world RL applications, in which data collection is costly or dangerous.
Enforcing constraint satisfaction is non-trivial, especially in offline settings, as there is a potential large discrepancy between the policy distribution and the data distribution, causing errors in estimating the value of safety constraints.
We show that na\"ive approaches that combine techniques from safe RL and offline RL can only learn sub-optimal solutions. We thus develop a simple yet effective algorithm, Constraints Penalized Q-Learning (CPQ), to solve the problem. Our method admits the use of data generated by mixed behavior policies.
We present a theoretical analysis and demonstrate empirically that our approach can learn robustly across a variety of benchmark control tasks, outperforming several baselines.
\end{abstract}

\section{Introduction}

Reinforcement Learning (RL) has achieved great success in solving complex tasks, including games \cite{mnih2013playing,silver2017mastering}, and robotics \cite{levine2016end}.
However, most RL algorithms learn good policies only after millions of trials and errors in simulation environments.
Consider real-world scenarios (e.g. self-driving cars, industrial control systems), where we only have a batch of pre-collected data (non-optimal), including some unsafe attempts (e.g. high-speed collisions in self-driving cars), no further active online data collection is allowed. The question then arises: how can we derive an effective policy from these offline data while satisfying safety constraints?

Safe RL is usually modeled as a Constrained Markov Decision Process (CMDP) \cite{altman1999constrained}. There are typically two kinds of constraints: hard constraints and soft constraints. Hard constraints require no constraints violation at each time step of the trajectory, while soft constraints require the policy to satisfy constraints in expectation throughout the whole trajectory. In this work, we focus on the soft constraints.
There is a branch of related work for safe RL, with the main focus on safe exploration \cite{chow2017risk,achiam2017constrained,tessler2018reward}. However, none of these algorithms are off-policy and cannot be used in offline settings. 
Although one recent study aims towards batch policy learning under constraints \cite{le2019batch}, this method assumes sufficient exploration of the data collection policy. This requirement usually does not hold in real-world scenarios, especially in high-dimensional continuous control tasks.

The key challenge is how to evaluate constraint violations accurately while maximizing reward effectively. Typically, this needs to roll out the policy in the environment and evaluate constraint values by on-policy samples \cite{tessler2018reward}. However, it is impossible in the offline setting, as we only have access to samples in the offline dataset. 
Evaluating constraint values from offline data is non-trivial, it will encounter serious issues when the evaluated policy lies outside of the dataset distribution. 
In value-based RL approaches, this may introduce errors in the Q-function backup and it is impossible to collect online data to correct such errors. The problem will be further exacerbated when the offline dataset is generated by multiple conflicting behavior policies, as the policy may be biased to be unsafe or sub-optimal.

One can use an extra cost critic (like the reward critic) to learn the constraint values, and use a divergence penalty to control the deviation between the learned policy and the dataset distribution. However, we show that this na\"ive method is too conservative and will lead to a sub-optimal solution.
Our primary contribution is presenting a new algorithm, Constraints Penalized Q-Learning (CPQ), to address the above challenges.
The intuition is that besides those original unsafe actions, we additionally make those actions that are out of the data distribution unsafe. To accomplish this, we modify the Bellman update of reward critic to penalize those state action pairs that are unsafe. 
CPQ does not use an explicit policy constraint and will not be restricted by the density of the dataset distribution, it admits the use of datasets generated by mixed behavior policies.
We also provide a theoretical error bound analysis of CPQ under mild assumptions. Through systematic experiments, we show that our algorithm can learn robustly to maximize rewards while successfully satisfying safety constraints, outperform all baselines in benchmark continuous control tasks.

\section{Related Work}

\subsection{Safe Reinforcement Learning}
Safe RL can be defined as the process of learning policies that maximizes long-term rewards while ensuring safety constraints.
When Markov transition probability is known, a straightforward approach is based on linear programming \cite{altman1999constrained}. In model-free settings, Lagrangian-based methods \cite{chow2017risk,tessler2018reward} augment
the standard expected reward objective with a penalty of constraint violation and solve the resulting problem with a learnable Lagrangian multiplier. However, Lagrangian-based policy can only asymptotically satisfy the constraint and makes no safety guarantee during the training process when interaction with the real-world environment is required\footnote{This property does not impact offline RL settings, as the training process does not involve online environment interaction.}.
Constrained policy optimization (CPO) \cite{achiam2017constrained} extends trust-region optimization \cite{schulman2015trust}, which can satisfy constraints during training, but the computational expense dramatically increases with multiple constraints.
There are some other approaches designed for convex constraints \cite{miryoosefi2019reinforcement} or hard constraints \cite{dalal2018safe,satija2020constrained}. However, all of these algorithms are on-policy, thus cannot be applied to the offline setting.
Constrained Batch Policy Learning (CBPL) \cite{le2019batch} considers safe policy learning offline, it uses Fitted Q Evaluation (FQE) to evaluate the safe constraints and learn the policy by Fitted Q Iteration (FQI), through a game-theoretic framework.

\subsection{Offline Reinforcement Learning}
Offline RL (also known as batch RL \cite{lange2012batch} or fully off-policy RL) considers the problem of learning policies from offline data without interaction with the environment.
One major challenge of offline RL is the \textbf{distributional shift} problem \cite{levine2020offline}, which incurs when the policy distribution deviates largely from the data distribution. 
Although off-policy RL methods \cite{mnih2013playing,lillicrap2016continuous} are naturally designed for tackling this problem, they typically fail to learn solely from fixed offline data and often require a growing batch of online samples for good performance.
Most recent methods attempted to solve this problem by constraining the learned policy to be “close” to the behavior policy. BCQ \cite{fujimoto2019off} learns a generative model for the behavior policy and adds small perturbations to it to stay close to the data distribution while maximizing the reward. Some other works use divergence penalties (such as KL divergence in BRAC \cite{wu2019behavior} or maximum mean discrepancy (MMD) in BEAR \cite{kumar2019stabilizing}) instead of perturbing actions. CQL \cite{kumar2020conservative} uses an implicit Q-value constraint between the learned policy and dataset samples, which avoids estimating the behavior policy.
The distributional shift problem can also be solved by model-based RL through a pessimistic MDP framework \cite{yu2020mopo,kidambi2020morel,zhan2021deepthermal} or by constrained offline model-based control \cite{argenson2021modelbased, zhan2021model}.

\section{Preliminary}
\subsection{Background}
A Constrained Markov Decision Process (CMDP) is represented by a tuple $(\mathcal{S}, \mathcal{A}, r, c, P, \gamma, \eta)$, where $\mathcal{S} \subset \mathbb{R}^{n}$ is the closed and bounded state space and $\mathcal{A} \subset \mathbb{R}^{m}$ is the action space. Let $r: \mathcal{S} \times \mathcal{A} \mapsto [0,\overline{R}]$ and $c: \mathcal{S} \times \mathcal{A} \mapsto [0,\overline{C}]$ denote the reward and cost function, bounded by $\overline{R}$ and $\overline{C}$. Let $P: \mathcal{S} \times \mathcal{A} \times \mathcal{S} \mapsto[0,1]$ denote the (unknown) transition probability function that maps state-action pairs to a distribution over the next state. Let $\eta$ denote the initial state distribution. And finally, let $\gamma \in [0,1)$ denote the discount factor for future reward and cost. 
A policy $\pi:\mathcal{S} \mapsto \mathcal{P}(\mathcal{A})$ correspounds to a map from states to a probability distribution over actions. Specifically, $\pi(a|s)$ denotes the probability of taking action $a$ in state $s$. In this work, we consider parametrized policies (e.g. neural networks), we may use $\pi_{\theta}$ to emphasize its dependence on parameter $\theta$.
The cumulative reward under policy $\pi$ is denoted as $R(\pi) = \mathbb{E}_{\tau \sim \pi}[\sum_{t=0}^{\infty} \gamma^{t} r(s_t, a_t)]$, where $\tau=(s_0, a_0, s_1, a_1,...)$ is a trajectory and $\tau \sim \pi$ means the distribution over trajectories is induced by policy $\pi$.
Similarly, the cumulative cost takes the form as $C(\pi) = \mathbb{E}_{\tau \sim \pi}[\sum_{t=0}^{\infty} \gamma^{t} c(s_t, a_t]]$.

Off-policy RL algorithms based on dynamic programming maintain a parametric Q-function $Q_{\phi}(s, a)$. 
Q-learning methods \cite{watkins1992q} train the Q-function by iteratively applying the Bellman optimality operator $\mathcal{T}^{*}Q(s, a) := r(s, a)+\gamma \mathbb{E}_{s'}[\max_{a'}Q(s',a')]$. 
In an actor-critic algorithm, the Q-function (critic) is trained by iterating the Bellman evaluation operator $\mathcal{T}^{\pi} Q=r+\gamma P^{\pi} Q$, where $P^{\pi}$ is the transition matrix coupled with the policy: $P^{\pi} Q(s,a)=\mathbb{E}_{s' \sim T\left(s' |s, a\right), a' \sim \pi\left(a'|s'\right)}\left[Q\left(s', a' \right) \right]$, and a separate policy is trained to maximize the expected Q-value.
Since the replay buffer typically does not contain all possible transitions $\left(s, a, s'\right)$, the policy evaluation step actually uses an empirical Bellman operator that only backs up a single sample $s'$. 
Notice that $\pi$ is trained to maximize Q-values, it may be biased towards out-of-distribution (OOD) actions with erroneously high Q-values. In standard (online) RL, such errors can be corrected by interacting with the environment and observing its actual value.

In our problem, we assume no interaction with the environment and only have a batch, offline dataset $\mathcal{B}$ = ${(s,a,s',r(s,a),c(s, a))}$, generated by following unknown arbitrary behavior policies. Note that these behavior policies may generate trajectories that violate safety constraints.
We use $\pi_\beta$ to represent the empirical behavior policy of the dataset, formally, $\pi_\beta(a_0|s_0):=\frac{\sum_{\mathbf{s}, \mathbf{a} \in \mathcal{B}} \mathbf{1}[s=s_0,a=a_0]}{\sum_{s \in \mathcal{B}} \mathbf{1}[s=s_0]}$, for all state $s_0 \in \mathcal{B}$. We use $\mu^{\beta}(s)$ to represent the discounted marginal state-distribution of $\pi_\beta(a|s)$, thus the dataset $\mathcal{B}$ is sampled from $\mu^{\beta}(s) \pi_\beta(a|s)$. 
The goal of safe offline learning is to learn a policy $\pi$ from $\mathcal{B}$ that maximizes the cumulative reward while satisfying the cumulative cost constraint, denoted as 
\begin{equation*} 
\label{eq:1}
\begin{split}
& \max_{\pi} \quad R(\pi) \\
& \mathrm{s.t.} \quad C(\pi) \leq l
\end{split}
\end{equation*}
where $l$ is the safe constraint limit (a known constant).

\subsection{An Na\"ive Approach}
An na\"ive approach to solve safe offline RL is combining techniques from safe RL and offline RL. For example, we could train an additional cost critic Q-network (to get the value of cumulative cost like in \cite{liang2018accelerated,ha2020learning}), along with a divergence constraint to prevent large distributional shift from $\pi_\beta$. Formally, we update both the reward and cost critic Q-network by the empirical Bellman evaluation operator $\mathcal{T}^{\pi}$ for $\left(s, a, s^{\prime}, r, c \right) \sim \mathcal{B}$:
\begin{align*}
Q_r(s, a)=r+\gamma\mathbb{E}_{a'\sim\pi(\cdot|s')}\left[Q_r(s', a')\right] \\ 
Q_c(s, a)=c+\gamma\mathbb{E}_{a'\sim\pi(\cdot|s')}\left[Q_c(s', a')\right]
\end{align*}
and then the policy can be derived by solving the following optimization problem:
\begin{align} 
\label{eq_opt}
\pi_\theta := \max_{\pi \in \Delta_{|S|}} \mathbb{E}_{s\sim \mathcal{B}, a\sim\pi(\cdot|s)} \left[ Q_r(s, a) \right] \notag \\
\mathrm{s.t.} \quad \mathbb{E}_{s\sim \mathcal{B}, a \sim \pi(\cdot|s)} \left[ Q_c(s, a) \right] \leq l \tag{Constraint 1} \\
D \left(\pi, \pi_{\beta}\right) \leq \xi \tag{Constraint 2}
\end{align}
$D$ can be any off-the-shelf divergence metrics (e.g., KL divergence or MMD distance) and $\xi$ is an approximately chosen small value. We can convert the constrained optimization problem to an unconstrained form by using the Lagrangian relaxation procedure, and solve it by dual gradient descent.

However, we argue that in this approach, Constraint 1 and Constraint 2 may not be satisfied simultaneously. Suppose $\mathcal{B}$ consists of transitions from both safe and unsafe policies. When the policy $\pi$ satisfies Constraint 2, it will match the density of the behavior policy distribution, when the behavior policy distribution contains some unsafe actions, the resulting policy may violate Constraint 1. One may consider to subtract transitions of the safe policy from $\mathcal{B}$ to construct a new "safe dataset", and only use it for training. Although in this case, Constraint 1 and Constraint 2 can be both satisfied, however, the missing of high-reward transitions will make the resulting policy sub-optimal. In principle, by carefully "stitching" together transitions from both safe and unsafe policies, the policies ought to produce trajectories with maximized cumulative reward while still satisfying safety constraints.

\section{Constraints Penalized Q-Learning}
In this section, we introduce our method, Constraints Penalized Q-Learning (CPQ), a simple yet effective algorithm for safe offline RL.
The key idea is to make OOD actions "unsafe" and update the reward critic using only state-action pairs that are "safe".
CPQ avoids explicit policy constraints, it involves the following three steps:

\noindent \textbf{Step 1:} \ We first make Qc-values of OOD actions larger than the safe constraint limit, 
we accomplish this by adding an additional term to the original objective of Bellman evaluation error, yielding a new objective:
\begin{equation}
\label{eq_qc}
\min_{Q_c} \mathbb{E}_{s,a,s'\sim  \mathcal{B}} \left[ \left( Q_c - \mathcal{T}^{\pi} Q_c \right)^2 \right] 
- \alpha \mathbb{E}_{s\sim \mathcal{B}, a\sim \nu }\left[Q_c(s, a)\right]
\end{equation}
Aside from the standard Bellman evaluation error term, Equation (\ref{eq_qc}) also maximizes the Qc-values at all states in the dataset $\mathcal{B}$, for those actions induced from the distribution $\nu$. Intuitively, if we choose $\nu$ to be a distribution that generates OOD actions, those OOD actions' Qc-values will be pushed up. Note that the Qc-values of in-distribution actions would be pushed down to obey the Bellman backup by the standard Bellman error term. Therefore, with an appropriate weight $\alpha$, we will only overestimate Qc-values of OOD actions, while keep them unchanged for in-distribution actions.


The remaining question is how to get the distribution $\nu$ that generates OOD actions. We avoid this hard problem by performing the OOD detection \cite{ren2019likelihood,liu2020energy}. As the policy $\pi$ is trained to maximize the reward critic, we only need to ensure that actions sampled by $\pi$ are not OOD. To do so, we pretrain the Conditional Variational Autoencoder (CVAE) to model the behavior policy of the dataset and utilize the latent space to do OOD detection. More specifically, we train the state-conditional VAE based on the following evidence lower bound (ELBO) objective on the log-likelihood of the dataset:
\begin{equation}
\label{eq_vae}
\max_{\omega_1, \omega_2} \mathbb{E}_{z \sim q_{\omega_2}}\left[\log p_{\omega_1}(a|s,z)\right] - \beta D_{\text{KL}}\left[q_{\omega_2}(z|s,a) \| p_{\omega_1}(z)\right]
\end{equation}
The first term represents the reconstruction loss and the second term is the KL-divergence between the encoder output and the prior of $z$. 
Note that action $a$ at state $s$ will have a high probability under the behavior data distribution if the value of $z \sim q_{\omega_2}(s,a) $ has a high probability under the prior $p(z)$. Since $p(z)$ is set to be $\mathcal{N}(0,1)$, we can let $\nu(s)=a$ if 
$D_{\text{KL}} [q_{\omega_2}(z|s,a) || \mathcal{N}(0,1) ] \geq d$, by introducing a hyperparameter $d$ to control the threshold. 
Previous works \cite{fujimoto2019off,kumar2019stabilizing} also use CVAE, but they use it to sample actions and compute the value of divergence metrics, which is different from our usage.

\noindent \textbf{Step 2:} \ In Step 1, the cost critic learned by CPQ is somewhat "distorted", i.e., Qc-values of OOD actions are likely to be larger than their true values and extrapolate to actions near the boundary of in-distribution actions.
In preliminary experiments, we found it did not work well when using the distorted cost critic to update the policy by dual gradient descent (i.e., $\max_{\pi} Q^{\pi}_{r}-\lambda Q^{\pi}_{c}$).
Fortunately, Qr-values in Step 1 remain unchanged, so we can update the policy by only maximizing Qr-values. We modify the reward critic's Bellman update to only backup from state action pairs that are both \textbf{constraint safe} and \textbf{in-distribution safe}, this is accomplished by multiplying $Q_r(s',a')$ by an indicator.
We define the empirical \textbf{Constraints Penalized Bellman operator} $\mathcal{T}_{P}^{\pi}$ for $\left(s, a, s^{\prime}, r, c \right) \sim \mathcal{B}$, as 
\begin{equation*}
\begin{split}
\label{eq_qr} 
\mathcal{T}_{P}^{\pi} Q_r(s, a) = r+\gamma \mathbb{E}_{a'\sim \pi} \left[\mathds{1}\left( Q_c(s',a') \leq l \right) Q_r(s',a') \right]
\end{split}
\end{equation*}
where $\mathds{1}$ is the indicator function.
It can be shown that $\mathcal{T}_{P}^{\pi}$ reduces the update from those unsafe state action pairs by using a pessimistic estimate of 0 to those pairs.
Given the offline dataset $\mathcal{B}$, we update the reward critic by minimizing the mean-square error (MSE) as 
\begin{equation}
\label{eq_qr_loss}
\min_{Q_r} \mathbb{E}_{s,a,s'\sim \mathcal{B}} \left[ \left( Q_r(s, a)-\mathcal{T}_{P}^{\pi} Q_r(s, a) \right)^2 \right]
\end{equation}

\noindent \textbf{Step 3:} \ Finally, in the policy improvement step, to ensure the final policy is safe, CPQ applies the indicator to the computed state-action values before performing maximization:
\begin{equation}
\label{eq_pi_loss}
\pi_\theta := \max_{\pi \in \Delta_{|S|}} \mathbb{E}_{s\sim \mathcal{B}}\mathbb{E}_{a\sim\pi(\cdot|s)} \left[ \mathds{1}\left( Q_c(s, a) \leq l \right) Q_r(s, a) \right]
\end{equation}

\noindent \textbf{Connections to CQL} \
CQL adds two penalty terms to the standard Bellman error term, the first term is minimizing Qr-values of actions from the learned policy, the second term is maximizing Qr-values of actions from the dataset. CQL makes the Qr-value landscape to have higher values in the area of data distribution than in the area of the policy distribution, since CQL only considers reward maximization, it gets rid of the bad impact of OOD actions. 
In our problem, the policy is trained to both maximize the reward and satisfy safety constraints. We cannot simply follow CQL by maximizing Qc-values of actions from the policy and minimizing Qc-values of actions from the dataset. Maximizing Qc-values of actions from the policy will deteriorate the performance when the policy outputs in-distribution actions. Hence, we detect the actions output by the policy and only make Qc-values of those OOD actions large.

\noindent \textbf{Practical Considerations} \
To reduce the number of hyperparameters, we can automatically tune $\alpha$ as shown below: 
\begin{equation}
\label{eq_qc_lag}
\begin{split}
\min_{Q_c} \max_{\alpha \geq 0} \mathbb{E}_{s,a,s'\sim  \mathcal{B}} \left[ \left( Q_c - \mathcal{T}^{\pi} Q_c \right)^2 \right] \\
- \alpha \left( \mathbb{E}_{s\sim \mathcal{B}, a\sim \nu}\left[Q_c(s, a)\right] - l_c \right)
\end{split}
\end{equation}
Equation (\ref{eq_qc_lag}) implies that $\alpha$ will stop increasing if Qc-values of OOD actions are larger than $l_c$. The parameter $l_c$ should be chosen to be larger than the constraint threshold $l$, in practice, we use $l_c =1.5 \times l$ across all tasks.
We use $\beta$-VAE \cite{Higgins2017betaVAELB} to learn better disentangled latent space representations compared to the original VAE framework (can be seen as a special case of $\beta$-VAE with $\beta$ = 1).
We sample $n$ actions from the policy and choose $\nu$ to be all the actions that violate the latent space threshold $d$. If none of the $n$ actions violates, Equation (\ref{eq_qc}) is reduced to the original Bellman evaluation error objective.
We also adopt the double-Q technique \cite{fujimoto2018addressing} to penalize the uncertainty in value estimations, we select the minimal value of two reward critics when computing the target Qr-values. This trick is not applied to the cost critic, as it will tend to underestimate the Qc-values.
Implementation details and hyperparameter choices can be found in Appendix B.
The pseudo-code of CPQ is presented in Algorithm 1.

\begin{algorithm}
\small
\caption{Constraints Penalized Q-Learning (CPQ)}
\label{alg1}
\begin{algorithmic}[1]
\REQUIRE 
$\mathcal{B}$, constraint limit $l$, threshold $d$.
\STATE Initialize encoder $E_{\omega_1}$ and decoder $D_{\omega_2}$.
\STATE \textbf{// VAE Training}
\FOR{$t=0,1,...,M$}
\STATE Sample mini-batch of state-action pairs $(s,a) \sim \mathcal{B}$
\STATE Update encoder and decoder by Eq.(\ref{eq_vae})
\ENDFOR
\STATE \textbf{// Policy Training}
\STATE Initialize reward critic ensemble $\{Q_{r_i}(s,a|\phi_{r_i})\}_{i=1}^{2}$ and cost critic $Q_{c}(s,a|\phi_{c})$, actor $\pi_{\theta}$, Lagrange multiplier $\alpha$, target networks $\{Q'_{r_i}\}_{i=1}^{2}$ and $Q'_{c}$, with $\phi'_{r_i} \leftarrow \phi_{r_i}$ and $\phi'_{c} \leftarrow \phi_{c}$
\FOR{$t=0,1,...,N$}
\STATE Sample mini-batch of transitions $(s,a,r,c,s') \sim \mathcal{B}$
\STATE Sample $n$ actions $\{a_{i} \sim \pi_{\theta}(a|s) \}_{i=1}^{n}$, get latent mean and std $\{ \mu_{i}, \sigma_{i} = E_{\omega_1}(s, a_{i}) \}_{i=1}^{n}$ and extract $m$ ($m \geq 0$) actions $\{ a_{j} | D_{\text{KL}}(\mathcal{N}(\mu_{j}, \sigma_{j}) \| \mathcal{N}(0,1)) \geq d \}_{j=1}^{m}$ from them.
\STATE Let $Q_c(s, \nu(s))= \frac{1}{m} \sum_{j} Q_c(s, a_{j})$ if $m>0$ otherwise $0$.
\STATE Update cost critic by Eq.(\ref{eq_qc}) and reward critics by Eq.(\ref{eq_qr_loss}).
\STATE Update actor by Eq.(\ref{eq_pi_loss}) using policy gradient.
\STATE Update target cost critic: $\phi'_{c} \leftarrow \tau \phi_{c}+(1-\tau)\phi'_{c}$
\STATE Update target reward critics: $\phi'_{r_i} \leftarrow \tau \phi_{r_i}+(1-\tau)\phi'_{r_i}$
\ENDFOR
\end{algorithmic}
\end{algorithm}

\section{Analysis}
In this section, we give a theoretical analysis of CPQ, specifically, we proof that we can learn a safe and high-reward policy given only the offline dataset.
We first give the notation used for the proof and define what is \textit{Out-of-distribution Action Set}, then we proof that CPQ can make Qc-values of out-of-distribution actions greater than $l$ with specific $\alpha$. Finally we give the error bound of the difference between the Qr-value obtained by iterating \textit{Constraints Penalized Bellman operator} and the Qr-value of the optimal safe policy $\pi^{*}$ that can be learned from the offline dataset.

\noindent \textbf{Notation} \ 
Let $Q^{k}$ denotes the true tabular Q-function at iteration $k$ in the MDP, without any correction. 
In an iteration, the current tabular Q-function, $Q^{k+1}$ is related to the previous tabular Q-function iterate $Q^{k}$ as: $Q^{k+1}=\mathcal{T}^{\pi} Q^{k}$. 
Let $\hat{Q}^{k}$ denote the $k$-th Q-function iterate obtained from CPQ. Let $\hat{V}^{k}$ denote the value function as $\hat{V}^{k}:=\mathbb{E}_{\mathbf{a} \sim \pi(\mathbf{a} \mid \mathbf{s})}\left[\hat{Q}^{k}(\mathbf{s}, \mathbf{a})\right]$.

We begin with the definition of \textit{Out-of-distribution Action Set}:
\begin{definition} [Out-of-distribution Action Set]
Given a dataset $\mathcal{B}$, its empirical behavior policy $\pi_\beta$ and $\epsilon \in (0,1)$, we call a set of actions $A_{\epsilon}$ (generated by the policy $\nu$) as the out-of-distribution action set, if
$\forall{\mathbf{s}} \in \mathcal{B}, \forall{\mathbf{a}} \in A_{\epsilon}, \frac{\pi_{\beta}(\mathbf{a}|\mathbf{s})}{\nu(\mathbf{a}|\mathbf{s})} \leq \epsilon$. 
\end{definition}
Intuitively, for those out-of-distribution actions (i.e., unlike to be in the data distribution), $\nu(\mathbf{a}|\mathbf{s})$ will be large while $\pi_{\beta}(\mathbf{a}|\mathbf{s})$ will be small. In contrast to out-of-distribution actions, in-distribution actions refer to those actions $\mathbf{a} \sim \pi_{\beta}(\mathbf{a}|\mathbf{s})$, i.e., have good support in the data distribution. Notice that here we do not care about out-of-distribution states as states used for training are sampled from $\mathcal{B}$. After introducing the out-of-distribution action set, we now show that we can make Qc-values of $A_{\epsilon}$ greater than $l$ with appropriate $\alpha$ when updating the cost critic by Equation (\ref{eq_qc}).

\begin{theorem}
For any $\nu(\mathbf{a} | \mathbf{s})$ with $\operatorname{supp} \nu \subset \operatorname{supp} \pi_{\beta}$, $\forall \mathbf{s} \in \mathcal{B}, \mathbf{a} \in A_{\epsilon}$, $\hat{Q}_c^{\pi}$ (the $Q$-function obtained by iterating Equation (\ref{eq_qc})) satisfies:
\[
 \hat{Q}_{c}^{\pi}(\mathbf{s},\mathbf{a}) = Q_{c}^{\pi}(\mathbf{s},\mathbf{a}) + \frac{\alpha}{2} \cdot \left[\left(I-\gamma P^{\pi}\right)^{-1} \frac{\nu(\mathbf{s}
 |\mathbf{a})}{\pi_{\beta}(\mathbf{s}|\mathbf{a})}\right](\mathbf{s}, \mathbf{a})
\]
and we can get $\hat{Q}_c^{\pi}(\mathbf{s},\mathbf{a}) \geq l, \forall \mathbf{s} \in \mathcal{B}, \mathbf{a} \in A_{\epsilon}$ if we choose $\alpha \geq \max \{ 2\epsilon \max_{\mathbf{s}, \mathbf{a}} \left( l-Q_c^{\pi}(\mathbf{s}, \mathbf{a}) \right) \left(I-\gamma P^{\pi}\right) (\mathbf{s}, \mathbf{a}), 0 \}$.
\end{theorem}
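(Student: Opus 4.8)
The plan is to follow the fixed-point argument familiar from conservative value estimation, adapted here to the cost critic. I would work in the tabular setting, viewing $Q_c$ as a finite vector indexed by $(\mathbf{s},\mathbf{a})$, and read the Bellman target in Equation~(\ref{eq_qc}) as being formed from the previous iterate, so that the per-iteration problem is a genuine quadratic in the free variable $Q_c$. Writing the two expectations explicitly as sums weighted by $\mu^{\beta}(\mathbf{s})\pi_{\beta}(\mathbf{a}|\mathbf{s})$ and $\mu^{\beta}(\mathbf{s})\nu(\mathbf{a}|\mathbf{s})$, I would differentiate with respect to each entry $Q_c(\mathbf{s},\mathbf{a})$ and set the gradient to zero. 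Since $\operatorname{supp}\nu \subset \operatorname{supp}\pi_{\beta}$, the weighting $\mu^{\beta}\pi_{\beta}$ is strictly positive wherever $\nu$ is supported and the ratio $\nu/\pi_{\beta}$ is well defined; dividing through by this weight then gives the pointwise update $\hat{Q}_c^{k+1} = \mathcal{T}^{\pi}\hat{Q}_c^{k} + \tfrac{\alpha}{2}\tfrac{\nu}{\pi_{\beta}}$.

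Next I would observe that this update is just the ordinary evaluation operator applied to the shifted cost $c + \tfrac{\alpha}{2}\tfrac{\nu}{\pi_{\beta}}$, hence still a $\gamma$-contraction with a unique fixed point. Inverting $(I-\gamma P^{\pi})$ in the fixed-point equation $(I-\gamma P^{\pi})\hat{Q}_c^{\pi} = c + \tfrac{\alpha}{2}\tfrac{\nu}{\pi_{\beta}}$ and subtracting the standard identity $Q_c^{\pi} = (I-\gamma P^{\pi})^{-1}c$ isolates the correction term and yields the claimed closed form.

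For the second claim I would bound the correction from below. Because $(I-\gamma P^{\pi})^{-1} = \sum_{t\geq 0}(\gamma P^{\pi})^{t}$ has only nonnegative entries and $\nu/\pi_{\beta}\geq 0$, every Neumann term is nonnegative, so keeping the $t=0$ term gives $[(I-\gamma P^{\pi})^{-1}\tfrac{\nu}{\pi_{\beta}}](\mathbf{s},\mathbf{a}) \geq \tfrac{\nu(\mathbf{a}|\mathbf{s})}{\pi_{\beta}(\mathbf{a}|\mathbf{s})}$. On $A_{\epsilon}$ the defining inequality $\pi_{\beta}/\nu \leq \epsilon$ gives $\nu/\pi_{\beta}\geq 1/\epsilon$, so $\hat{Q}_c^{\pi}(\mathbf{s},\mathbf{a}) \geq Q_c^{\pi}(\mathbf{s},\mathbf{a}) + \tfrac{\alpha}{2\epsilon}$. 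Forcing the right-hand side to be at least $l$ and taking the nonnegative part (as $\alpha\geq 0$) gives a threshold of the stated shape; the precise form involving $(I-\gamma P^{\pi})$ follows by instead invoking the order-preservation of the nonnegative inverse to turn the target inequality $\tfrac{\alpha}{2}(I-\gamma P^{\pi})^{-1}\tfrac{\nu}{\pi_{\beta}} \geq l - Q_c^{\pi}$ into the pointwise sufficient condition $\tfrac{\alpha}{2}\tfrac{\nu}{\pi_{\beta}} \geq (I-\gamma P^{\pi})(l - Q_c^{\pi})$.

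I expect the main obstacle to be the first step, not the second. One has to justify that the functional minimizer of Equation~(\ref{eq_qc}) genuinely satisfies the entrywise stationarity condition and that iterating the shifted operator converges; this rests on the tabular reduction, strict positivity of the behavior weighting on the relevant support, and the contraction property. The remaining inequality manipulations are comparatively routine, the only care being the direction of each bound and the nonnegativity of $\nu/\pi_{\beta}$ and of $(I-\gamma P^{\pi})^{-1}$.
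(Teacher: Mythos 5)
Your proposal follows essentially the same route as the paper's own proof: setting the derivative of the objective in Equation (\ref{eq_qc}) to zero to obtain the recursion $\hat{Q}_c^{k+1}=\mathcal{T}^{\pi}\hat{Q}_c^{k}+\frac{\alpha}{2}\cdot\frac{\nu}{\pi_{\beta}}$, passing to its fixed point to get the closed form involving $(I-\gamma P^{\pi})^{-1}$, and then combining the defining property $\pi_{\beta}/\nu\leq\epsilon$ of $A_{\epsilon}$ with the nonnegativity of the occupancy matrix to derive the threshold on $\alpha$. If anything, your final step (the Neumann-series truncation and the order-preservation argument for $(I-\gamma P^{\pi})^{-1}$) is spelled out more carefully than in the paper, which informally moves $(I-\gamma P^{\pi})$ and the density ratio across the inequality at a single state-action pair.
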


\begin{proof}
By setting the derivative of Equation (\ref{eq_qc}) to 0, we obtain the following expression for $\hat{Q}_c^{k+1}$ in terms of $\hat{Q}_c^{k}$,
\begin{equation}
\label{eq_derivative}
\forall k, \quad \hat{Q}_c^{k+1}(\mathbf{s}, \mathbf{a})=\mathcal{T}^{\pi} \hat{Q}^{k}_c(\mathbf{s}, \mathbf{a})+ \frac{\alpha}{2} \cdot \frac{\nu(\mathbf{a}|\mathbf{s})}{\pi_{\beta}(\mathbf{a}|\mathbf{s})}
\end{equation}

Since $\nu(\mathbf{a}|\mathbf{s})>0, \alpha>0, \pi_{\beta}(\mathbf{a}|\mathbf{s})>0$, we observe that at each iteration we enlarge the next Qc-value, i.e. $\hat{Q}_c^{k+1} \geq \mathcal{T}^{\pi} \hat{Q}_c^{k}$.
Now let's examine the fixed point of Equation (\ref{eq_derivative}) as, 
\begin{equation*}
\begin{split}
&\hat{Q}_c^{\pi}(\mathbf{s}, \mathbf{a}) = \mathcal{T}^{\pi} \hat{Q}_c^{\pi}(\mathbf{s}, \mathbf{a})+\frac{\alpha}{2} \cdot \frac{\nu(\mathbf{a}|\mathbf{s})}{\pi_{\beta}(\mathbf{a}|\mathbf{s})} \\
&= c + \gamma P^{\pi} \hat{Q}_c^{\pi}(\mathbf{s}, \mathbf{a}) + \frac{\alpha}{2} \cdot \frac{\nu(\mathbf{a}|\mathbf{s})}{\pi_{\beta}(\mathbf{a}|\mathbf{s})} \\
&= Q_c^{\pi}(\mathbf{s}, \mathbf{a}) (I-\gamma P^{\pi}) + \gamma P^{\pi} \hat{Q}_c^{\pi}(\mathbf{s}, \mathbf{a}) + \frac{\alpha}{2} \cdot \frac{\nu(\mathbf{a}|\mathbf{s})}{\pi_{\beta}(\mathbf{a}|\mathbf{s})} \\
&= Q_c^{\pi}(\mathbf{s}, \mathbf{a}) + \gamma P^{\pi} \left[ \hat{Q}_c^{\pi}(\mathbf{s}, \mathbf{a}) - Q_c^{\pi}(\mathbf{s}, \mathbf{a}) \right] + \frac{\alpha}{2} \cdot \frac{\nu(\mathbf{a}|\mathbf{s})}{\pi_{\beta}(\mathbf{a}|\mathbf{s})}
\end{split}
\end{equation*}

So we can get the relationship between $\hat{Q}_c^{\pi}$ and the true Qc-value $Q_c^{\pi}$ as,
\begin{equation*}
\hat{Q}_c^{\pi}(\mathbf{s}, \mathbf{a}) = Q_c^{\pi}(\mathbf{s}, \mathbf{a}) + \frac{\alpha}{2} \left(I-\gamma P^{\pi}\right)^{-1} \left[ \frac{\nu(\mathbf{a}|\mathbf{s})}{\pi_{\beta}(\mathbf{a}|\mathbf{s})}\right](\mathbf{s}, \mathbf{a})
\end{equation*}

If $Q_c^{\pi}(\mathbf{s}, \mathbf{a}) \geq l$, i.e., the true Qc-value of this state-action pair is greater than the constraint threshold, we don't need to enlarge the Qc-value, set $\alpha=0$ works. 
Otherwise, if $Q_c^{\pi}(\mathbf{s}, \mathbf{a}) \leq l$, the choice of $\alpha$ that guarantee $\hat{Q}_c^{\pi}(\mathbf{s}, \mathbf{a}) \geq l$ for $\mathbf{a} \in A_{\epsilon}$, is then given by:
\begin{align}
&\alpha \geq 2 \left( l-Q_c^{\pi}(\mathbf{s}, \mathbf{a}) \right) \left(I-\gamma P^{\pi}\right) \left[ \frac{\hat{\pi}_{\beta}(\mathbf{a}|\mathbf{s})}{\mu(\mathbf{a}|\mathbf{s})}\right](\mathbf{s}, \mathbf{a}) \notag\\ \label{eq_7}
\Longrightarrow &\alpha \geq 2 \epsilon \cdot \max_{\mathbf{s}, \mathbf{a}} \left( l-Q_c^{\pi}(\mathbf{s}, \mathbf{a}) \right) \left(I-\gamma P^{\pi}\right) (\mathbf{s}, \mathbf{a})
\end{align}
Note that $\left(I-\gamma P^{\pi}\right)$ is a matrix (the inverse of the state occupancy matrix \cite{sutton1998introduction}) with all non-negative entries, and (\ref{eq_7}) holds because of the definition of out-of-distribution action set.
In all, choose $\alpha \geq \max \{2\epsilon \max_{\mathbf{s}, \mathbf{a}} \left( l-Q_c^{\pi}(\mathbf{s}, \mathbf{a}) \right) \left(I-\gamma P^{\pi}\right) (\mathbf{s}, \mathbf{a}), 0 \}$ will satisfy $\forall \mathbf{s} \in \mathcal{B}, \mathbf{a} \in A_{\epsilon}, \hat{Q}_c^{\pi}(\mathbf{s},\mathbf{a}) \geq l$.
\end{proof}

Now we show the error bound of the difference between the value obtained by CPQ and the value of the optimal safe policy $\pi^{*}$ on the dataset.
\begin{theorem}
Let $\| \hat{Q}_r^{k}-\mathcal{T}_{P}^{\pi} \hat{Q}_r^{k-1} \|_{\mu^{\beta}}$ be the squared approximation error of the Constraints Penalized Bellman operator $\mathcal{T}_{P}^{\pi}$ at iteration $k$. Let $\| Q_r^{k}-\mathcal{T}^{\pi} Q_r^{k-1} \|_{\mu^{\beta}}$ be the squared approximation error of the Bellman evaluation operator $\mathcal{T}^{\pi}$ at iteration $k$. If these two errors are bounded by $\delta$, then $\forall \mathbf{s} \in \mathcal{B}$, we have: 
\begin{align*}
1) \lim_{k \rightarrow \infty} \hat{V}_{c}^k \leq l \ ; \ 
2) \lim_{k \rightarrow \infty} \left|V_r^{*}-\hat{V}_r^k \right| \leq \frac{4\gamma}{(1-\gamma)^{3}} G(\epsilon) \sqrt{\delta}
\end{align*}
where $G(\epsilon)=\sqrt{(1-\gamma)/\gamma}+\sqrt{\epsilon/g(\epsilon)}$ and define $g(\epsilon) := \min _{\mu^{\pi}(s)>0}[\mu^{\beta}(s)]$, $g(\epsilon)$ captures the minimum discounted visitation probability of states under behaviour policy.
\end{theorem}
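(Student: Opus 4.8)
The plan is to prove the two claims separately, with claim 1 following almost directly from the construction and claim 2 requiring a careful error-propagation argument in the style of approximate dynamic programming. For claim 1, I would argue from the policy improvement step: by Equation (\ref{eq_pi_loss}) the learned policy $\pi$ places mass only on actions satisfying $\hat{Q}_c(\mathbf{s},\mathbf{a}) \leq l$, since any action with $\hat{Q}_c > l$ contributes zero to the maximization objective and is never preferred (assuming at least one safe in-support action of positive reward exists at each state). By Theorem 1 the out-of-distribution actions already satisfy $\hat{Q}_c \geq l$, so the surviving support consists of in-distribution safe actions. Hence $\hat{V}_c^k(\mathbf{s}) = \mathbb{E}_{\mathbf{a}\sim\pi}[\hat{Q}_c^k(\mathbf{s},\mathbf{a})] \leq l$, being an expectation of quantities each bounded by $l$, and passing to the limit in $k$ preserves the inequality.

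For claim 2, I would first establish that the Constraints Penalized operator $\mathcal{T}_P^{\pi}$ is a $\gamma$-contraction in the sup-norm: the indicator only replaces some backed-up values by the pessimistic estimate $0$, which cannot increase the Lipschitz constant, so the CPQ iterates converge to a well-defined fixed point $\hat{Q}_r^{\pi}$. I would then split $|V_r^{*}-\hat{V}_r^k|$ by the triangle inequality into (a) the gap between the optimal safe value $V_r^{*}$ and the fixed point $\hat{V}_r^{\pi}$ of the exact penalized operator, and (b) the gap between that fixed point and the actual iterate $\hat{V}_r^k$ induced by the per-iteration fitting error, which is bounded by $\delta$ in the $\mu^{\beta}$-weighted $L^2$ norm.

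For term (b) I would invoke the classical approximate-value-iteration propagation bound: accumulating per-step errors of size $\sqrt{\delta}$ through the discounted recursion yields a factor $\frac{\gamma}{(1-\gamma)^2}$, and converting the $\mu^{\beta}$-weighted error into a bound on the value under the evaluation distribution introduces a change-of-measure coefficient. The factor $G(\epsilon)$ arises here: the term $\sqrt{(1-\gamma)/\gamma}$ is the baseline density ratio for in-distribution backups, while $\sqrt{\epsilon/g(\epsilon)}$ is the extra concentrability penalty incurred by the penalized operator, because the actions it zeroes out are out-of-distribution and hence have behavior-policy probability at most $\epsilon$ relative to the minimum state visitation $g(\epsilon)$. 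For term (a) I would bound the bias of the exact penalized operator against the true safe Bellman operator, again using Theorem 1 so that only OOD actions of mass $\leq \epsilon$ are suppressed, absorbing this bias into $G(\epsilon)\sqrt{\delta}$. Combining the $\frac{\gamma}{(1-\gamma)^2}$ propagation factor with the remaining $\frac{1}{1-\gamma}$ from the value summation and the constant $4$ accounting for the two triangle-inequality branches produces the stated $\frac{4\gamma}{(1-\gamma)^3} G(\epsilon)\sqrt{\delta}$.

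The main obstacle will be controlling the discrepancy introduced by the indicator rigorously: unlike ordinary approximate value iteration, $\mathcal{T}_P^{\pi}$ is a biased operator whose fixed point differs from the true safe value, and one must show this bias is confined to OOD actions (via Theorem 1) and is therefore quantitatively small in the behavior-weighted norm through the $\epsilon/g(\epsilon)$ concentrability term. Threading the $L^2(\mu^{\beta})$ approximation error through both the contraction recursion and the change of measure, while keeping the two error sources, the fitting error $\delta$ and the penalization bias, on a common footing, is the delicate part of the argument.
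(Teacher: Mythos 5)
Your treatment of claim 1 matches the paper's: the indicator in the policy improvement objective forces $\pi$ to put mass only on actions with $\hat{Q}_c \leq l$ (the paper is even terser, simply asserting this), so that part is fine. The genuine gap is in claim 2, specifically in your term (a). You decompose $|V_r^{*}-\hat{V}_r^k|$ through the fixed point $\hat{V}_r^{\pi}$ of the exact penalized operator and assert that the gap $|V_r^{*}-\hat{V}_r^{\pi}|$ is an ``operator bias'' that Theorem 1 lets you absorb into $G(\epsilon)\sqrt{\delta}$. This does not work as stated. That gap is the sub-optimality of the \emph{learned policy} $\pi$ relative to $\pi^{*}$: even with no indicator at all, the fixed points of $\mathcal{T}^{\pi^{*}}$ and $\mathcal{T}^{\pi}$ differ by the policy mismatch, and Theorem 1 --- which only says that OOD actions receive inflated $Q_c$-values and hence get zeroed out --- says nothing about how close $\pi$ is to $\pi^{*}$. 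Worse, as you have framed it, term (a) involves only exact operators and their fixed points, so it has no dependence on $\delta$ at all; it cannot be ``proportional to $\sqrt{\delta}$'' without further argument. Its $\delta$-dependence enters only implicitly, through the fact that $\pi$ is chosen greedily with respect to the approximate filtered critic $\hat{Q}_r^k$, and unwinding that implicit dependence is precisely the step your proposal omits.

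The paper supplies that step at the outset: it invokes the performance difference lemma to convert $V_r^{*}-\hat{V}_r^{k}$ into a filtered Q-gap $|P_c(Q_r^{*}-\hat{Q}_r^{k})|$ under distributions induced by $\pi_k$ and $\pi^{*}$, using that $\pi^{*}$ is safe and in-distribution (so the filter equals $1$ on its support) and that $\pi_k$ maximizes the filtered value $P_c \hat Q_r^k$. It then runs a \emph{single} error-propagation recursion on $Q_r^{*}-\hat{Q}_r^{k}$ in which both error sources appear together: the fitting error $\|\hat{Q}_r^{k}-\mathcal{T}_{P}^{\pi}\hat{Q}_r^{k-1}\|_{\mu^{\beta}}\leq\delta$ and the bias term $\|Q_r^{*}-\mathcal{T}_{P}^{\pi}Q_r^{*}\|_{\mu^{\beta}}$, the latter controlled by the theorem's \emph{second} bounded-error hypothesis --- which is exactly why the statement posits two errors, a hypothesis your plan never uses. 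The change of measure is then handled by a concentration assumption plus a coupling lemma showing the indicator keeps the ratio $\mu^{\pi}/\mu^{\beta}$ bounded by $1+\frac{\gamma\epsilon}{(1-\gamma)g(\epsilon)}$; this is where $G(\epsilon)$ actually arises, and your reading of its two terms (a baseline term plus a concentrability term enabled by the indicator) is correct in spirit, as is your term (b) and the contraction observation. So the repair is concrete: replace term (a) by the performance-difference argument, and invoke the second hypothesis of the theorem for the bias of $\pi^{*}$ under the penalized backup, rather than appealing to Theorem 1.
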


\begin{proof}
For 1), it can be easily derived from (\ref{eq_pi_loss}) that when $k \rightarrow \infty$, $\forall s \in \mathcal{B}$, we have $\hat{V}_c(\mathbf{s}) = \mathbb{E}_{\mathbf{a} \sim \pi(\cdot|\mathbf{s})} \left[ \hat{Q}_c(\mathbf{s},\mathbf{a}) \right] \leq l$.
For 2), we give a proof sketch here, detailed proof can be found in Appendix A. The proof sketch goes as follow, we first convert the performance difference between $\pi^{*}$ and $\pi_{t}$ to a value function gap that is filtered by the indicator $\mathds{1}\left( \hat{Q}_c(\mathbf{s'}
,\mathbf{a'}) \leq l \right)$ (for simplicity, we denote it as $P_c$ below):
\begin{equation*}
V_{r}^{*}-\hat{V}_{r}^{k} \leq \frac{1}{\gamma} \mathbb{E}_{\mathbf{s},\mathbf{a} \sim \nu} \left[ \left| P_c(\mathbf{s},\mathbf{a})\left(Q_{r}^{*}(\mathbf{s},\mathbf{a})-\hat{Q}_{r}^{k}(\mathbf{s},\mathbf{a})\right)\right| \right]
\end{equation*}
where $\nu$ is any distribution over state-action space $\mathcal{S} \times \mathcal{A}$.
We then prove $\left| P_c(Q_{r}^{*}-\hat{Q}_{r}^{k}) \right|_{\nu} = \mathbb{E}_{\nu} \left| P_c(Q_{r}^{*}-\hat{Q}_{r}^{k}) \right|$ can be bounded by $C \left( \left| \hat{Q}_{r}^{k}- \mathcal{T}_{P}^{\pi} \hat{Q}_{r}^{k-1} \right|_{\mu^{\beta}} + \left| Q_{r}^{*}-\mathcal{T}^{\pi} Q_{r}^{*} \right|_{\mu^{\beta}} \right) $.
$\left| Q_{r}^{*}-\mathcal{T}^{\pi} Q_{r}^{*} \right|_{\mu^{\beta}}$ is the additional sub-optimality error term, it comes from the fact that the optimal policy may not satisfy $\pi_{\beta}/\pi^{*} \geq \epsilon$. 
The filter $P_c$ allows the change of measure from $\nu$ to $\mu^{\beta}$ by bounding the concentration constant $C$, which captures the maximum density ratio between marginal distribution $\nu(s)$ and $\mu^{\beta}(s)$. Then the main theorem is proved by combining all those steps.
\end{proof}
\noindent \textbf{Summary} 
\quad We show that we can enlarge Qc-values of OOD actions to be greater than $l$ by adjusting $\alpha$ in Theorem 1. We also show the performance guarantee in Theorem 2. 
Note that we can vary $\epsilon$ to make $G(\epsilon)$ as small as possible by adjusting the parameter $d$ in Algorithm 1, which is the only hyperparameter that needs to be tuned.  This result guarantees that, upon the termination of Algorithm 1, the true performance of the main objective can be close to that of the optimal safe policy. At the same time, the safe constraint will be satisfied, assuming sufficiently large $k$.

\begin{figure*} [t]
\centering
\includegraphics[width=2.12\columnwidth]{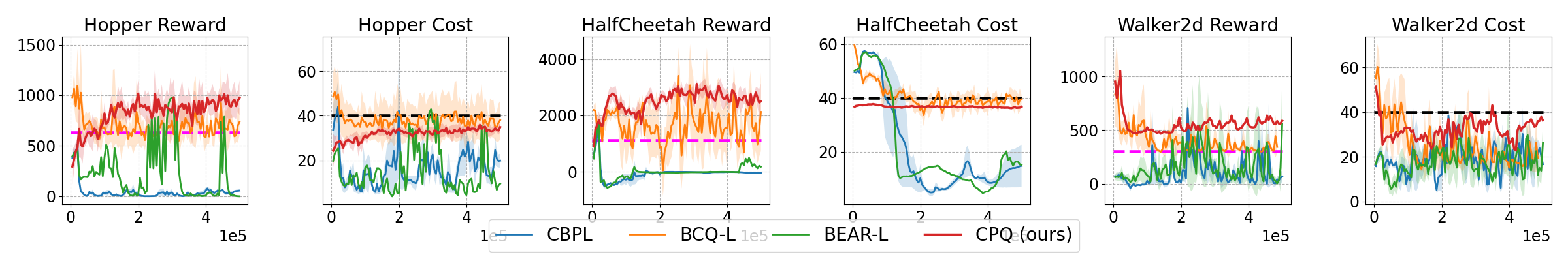}
\caption{We evaluate CPQ and different baselines according to the experiments of Section 6.1. 
The shaded area represents one standard deviation around the mean. 
The dashed magenta line measures the performance of BC-Safe. 
The constraint threshold $l$ is indicated by the dashed black line. 
It can be seen that CPQ is robust to learn from different scenarios, outperforms other baselines while still satisfying safe constraints.}
\label{fig1}
\end{figure*}

\section{Experiments}
\subsection{Settings}
We conducted experiments on three Mujoco tasks: \texttt{Hopper-v2}, \texttt{HalfCheetah-v2} and \texttt{Walker2d-v2}. These tasks imitate scenarios encountered by robots in real life. The robot is composed of multiple joints, at each step the agent selects the amount of torque to apply to each joint.
In the experiments, we aim to prolong the motor life of different robots, while still enabling them to perform tasks. To do so, the motors of robots need to be constrained from using high torque values. This is accomplished by defining the constraint $C$ as the discounted cumulative torque that the agent has applied to each joint, and per-state penalty $c(s, a)$ is the amount of torque the agent decides to apply at each step.

For each environment, we collect data using a \textit{safe policy} which has low rewards with safety constraints satisfied and an \textit{unsafe policy} which has high reward but violates safety constraints.
The unsafe policy was trained by PPO \cite{schulman2017proximal} until convergence to the returns mentioned in Figure 1 and the safe policy was trained by CPO \cite{achiam2017constrained} using the constraint threshold $l=30$.
The dataset is a mixture of 50\% transitions collected by the safe policy and 50\% collected by the unsafe policy. Mixture datasets are of particular interest, as it covers many practical use-cases where agents act safely in most cases but have some unsafe attempts for more benefits.
Each dataset contains 2e6 samples. We used the same dataset for evaluating different algorithms to maintain uniformity across results.

Each agent is trained for 0.5 million steps and evaluated on 10 evaluation episodes (which were separate from the train distribution) after every 5000 iterations, we use the average score and the variance for the plots.

\subsection{Baselines}
We compare CPQ with the following baselines:

\noindent \textbf{CBPL}: 
CBPL \cite{le2019batch}, which learns safe policies by applying FQE and FQI, was originally designed for discrete control problems, we extend it to continuous cases by using continuous FQI \cite{antos2008fitted}.

\noindent \textbf{BCQ-Lagrangian}: As BCQ \cite{fujimoto2019off} was not designed for safe offline RL, we combine BCQ with the Lagrangian approach, which uses adaptive penalty coefficients to enforce constraints, to obtain BCQ-Lagrangian.

\noindent \textbf{BEAR-Lagrangian}: Analogous to BCQ-Lagrangian, but to use another state-of-the-art offline RL method BEAR \cite{kumar2019stabilizing}.

\noindent \textbf{BC-Safe}: As mentioned in Section 3.2, we also include a Behavior Cloning baseline, using only data generated from the safe policy. This serves to measure whether each method actually performs effective RL, or simply copies the data.

\subsection{Comparative Evaluations}
It is shown in Figure 1 that CPQ achieves higher reward while still satisfying safety constraints, compared to two na\"ive approaches (BCQ-L and BEAR-L). 
Na\"ive approaches achieve sub-optimal performance due to the reasons discussed in Section 3.2. For example, BEAR-L
has difficulty to learn the balance between two Lagrangian multiplier, $\lambda_1$ for the safety constraint and $\lambda_2$ for the divergence constraint, this two multiplier raise their value by turns to try to satisfy either of the two constraints, making the effect of Qr diluted.
BCQ-L performs better than BEAR-L, but still suffers from zig-zag learning curves due to the similar reason.
CBPL diverges and fails to learn good policies in all three environments, due to large value estimation errors caused by OOD actions.
It can also be observed that the constraint values of CPQ are sometimes lower than the threshold, due to the reason that $\nu$ sometimes falsely chooses in-distribution actions, making the Qc-values of these actions erroneously large. This suggests that the performance of CPQ may be further enhanced by applying more advanced OOD detection techniques to construct $\nu$, we leave it for future work.

\subsection{Sensitivity to Constraint Limit $l$}

The results discussed in the previous section suggest that CPQ outperforms other baselines on several challenging tasks. 
We’re now interested in the sensitivity of CPQ to different constraint limit $l$.
It can be seen in Figure 2 that CPQ is robust to different constraint limits. This means that we can do counterfactual policy learning\footnote{In online RL under constraint, the agent needs to “re-sample-and-learn” from scratch when the constraint limit is modified.} \cite{garcia2015comprehensive}, i.e., adjust $l$ post-hoc to derive policies with different safety requirements. 
Note that imitation-based methods (e.g. BC-Safe) can only satisfy the original constraint limit $l$.

\begin{figure} [t]
\centering
\includegraphics[width=0.95\columnwidth]{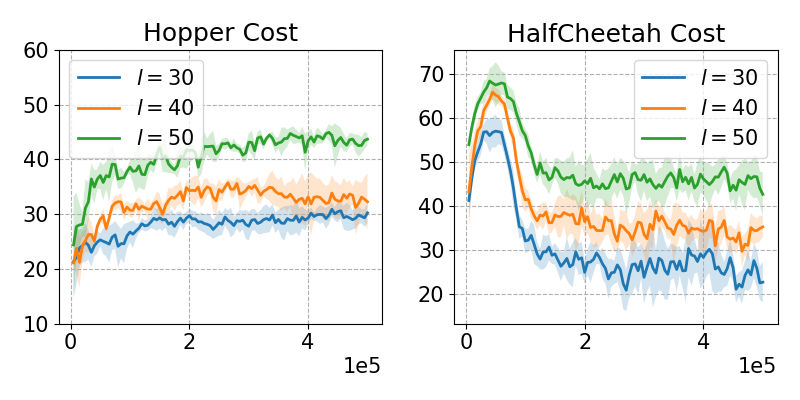}
\caption{Sensitivity to constraint limit $l$.}
\label{fig2}
\end{figure}




\section{Conclusions and Future Work}
We present a novel safe offline RL algorithm, CPQ, the first continuous control RL algorithm capable of learning from mixed offline data under constraints. 
Through theoretical analysis as well as systematic experimental results, we show that CPQ achieves better performance across a variety of tasks, comparing to several baselines.
One future work is to use more advanced OOD detection techniques (e.g., using energy scores \cite{liu2020energy}), to further enhance CPQ's performance. Another future work is developing new algorithms to tackle offline RL under hard constraints. 
We hope our work can shed light on safe offline RL, where one could train RL algorithms offline, and provides reliable policies for safe and high quality control in real-world tasks.

\section*{Acknowledgments}
A preliminary version of this work was accepted as a spotlight paper in RL4RealLife workshop at ICML 2021. This work was supported by the National Key R\&D Program of China (2019YFB2103201) and the National Science Foundation of China (No. 61502375).

\bibliography{aaai22}

\appendix
\onecolumn

\end{document}